\newcommand{\ADDAND}{\ensuremath{\text{ADD}[\land]}\xspace}
\newcommand{\OBDDAND}{\ensuremath{\text{OBDD}[\land]}\xspace}
\def\thanks#1{\protected@xdef\@thanks{\@thanks
		\protect\footnotetext{#1}}}
\title{Scalable Precise Computation of Shannon Entropy
	  } 
\titlerunning{Scalable Precise Computation of Shannon
	Entropy} 
\author{Yong Lai}{Key Laboratory of Symbolic Computation and Knowledge Engineering  Ministry of Education, Jilin University, China}{laiy@jlu.edu.cn}{0000-0002-6882-0107}{}
\author{Haolong Tong}{College of Computer Science and Technology, Jilin University, China}{tonghl22@mails.jlu.edu.cn}{0009-0007-9320-0923}{}
\author{Zhenghang Xu}{School of Computer Science and Information Technology, Northeast Normal University, China}{zhenghangxu97@gmail.com}{0000-0001-6535-8052}{}
\author{Minghao Yin\thanks{Authors are listed alphabetically by last name. Corresponding author: Minghao Yin}}{School of Computer Science and Information Technology, Northeast Normal University, China}{ymh@nenu.edu.cn}{0000-0002-6226-2394}{}
\authorrunning{Y. Lai, H. Tong, Z. Xu, and M. Yin}  
\keywords{Knowledge Compilation, Algebraic Decision Diagrams, Quantitative Information Flow analysis, Shannon Entropy} 
\begin{document}

\maketitle

\begin{abstract}
	Quantitative information flow analyses (QIF) are a class of techniques for measuring the amount of confidential information leaked by a program to its public outputs. 
	Shannon entropy is an important method to quantify the amount of leakage in QIF.
	This paper focuses on the programs modeled in Boolean constraints and optimizes the two stages of the Shannon entropy computation to implement a scalable precise tool PSE.
	In the first stage, we design a knowledge compilation language called \ADDAND that combines Algebraic Decision Diagrams and conjunctive decomposition.
	\ADDAND avoids enumerating possible outputs of a program and supports tractable entropy computation. 
	In the second stage, we optimize the model counting queries that are used to compute the probabilities of outputs. 
	We compare PSE with the state-of-the-art probabilistic approximately correct tool EntropyEstimation, which was shown to significantly outperform the previous precise tools.
	The experimental results demonstrate that PSE solved 56 more benchmarks compared to EntropyEstimation in a total of 459. For 98\% of the benchmarks that both PSE and EntropyEstimation solved, PSE is at least $10\times$ as efficient as EntropyEstimation.
\end{abstract}



\section{Introduction}
\label{sec:Intro}

Quantitative information flow (QIF) is an important approach to measuring the amount of information leaked about a secret by observing the running of a program~\cite{denning1982cryptography,gray1992toward}.
In QIF, we often quantify the leakage using entropy-theoretic notions, such as Shannon entropy~\cite{backes2009automatic,cerny2011complexity,phan2012symbolic,smith2009foundations} or
min-entropy~\cite{backes2009automatic,meng2011calculating,phan2012symbolic,smith2009foundations}.
Roughly speaking, a program in QIF can be seen as a function from a set of secret inputs $X$ to outputs $Y$ observable to an attacker who may try to infer $X$ based on the output $Y$.
Boolean formulas are a basic representation to model programs~\cite{fremont2017maximum,golia2022scalable}. 
In this paper, we focus on precisely computing the Shannon entropy of a program expressed in Boolean formulas.

Let $\varphi(X,Y)$ be a (Boolean) formula that models the relationship between the input variable set $X$ and the output variable set $Y$ in a given program, such that for any assignment of $X$, at most one assignment of $Y$ satisfies the formula $\varphi(X,Y)$.
Let $p$ represent a probability distribution defined over the set $\{ \mathit{false}, \mathit{true} \}^Y$.
For each assignment $\sigma$ to $Y$, the probability is defined as $p_{\sigma} = \frac{\left| \mathit{Sol}(\varphi(Y \mapsto \sigma)) \right|}{ \left| \mathit{Sol}(\varphi)_{\downarrow X} \right| }$, where $\mathit{Sol}(\varphi(Y \mapsto \sigma))$ denotes the set of solutions of $\varphi(Y \mapsto \sigma)$ and $\mathit{Sol}(\varphi)_{\downarrow X}$ denotes the set of solutions of $\varphi$ projected to $X$.
The Shannon entropy of $\varphi$ is $H(\varphi) = \sum_{\sigma \in 2^Y} -p_{\sigma} \log p_{\sigma} $.
Then we can immediately obtain a measure of leaked information with the computed entropy and the assumption that $X$ follows a uniform distribution~\footnote{If $X$ does not follow a uniform distribution, techniques exist for reducing the analysis to a uniform case~\cite{backes2011non}.}~\cite{klebanov2013sat}. 

The workflow of existing precise methods for computing entropy can often be divided into two stages. 
In the first stage, we enumerate possible outputs, i.e., the satisfying assignments over $Y$, while in the second stage, we compute the probability of the current output based on the number of inputs mapped to the output~\cite{golia2022scalable}.
The computation in the second stage often invokes model counting (\#SAT), which refers to computing the number of solutions $\mathit{Sol}(\varphi)$ for a given formula $\varphi$. 
Due to the exponential number of possible outputs, the current precise methods are often difficult to scale to programs with a large size of $Y$.
Therefore, researchers have increasingly focused on approximate estimation of Shannon entropy.
We remark that Golia et al.~\cite{golia2022scalable} proposed the first Shannon entropy estimation tool, EntropyEstimation, which guarantees that the estimate lies within $(1 \pm \epsilon)$-factor of $H(\varphi)$ with confidence at least $1-\delta$.
EntropyEstimation employs uniform sampling to avoid generating all outputs, and indeed scales much better than the precise methods.

As previously discussed, existing methods for precisely computing Shannon entropy struggle to scale when applied to formulas with a large set of outputs.
Theoretically, this requires performing up to $2^{|Y|}$ model counting queries.
The primary contribution of this paper is to enhance the scalability of precise Shannon entropy computation by improving both stages of the computation process.
For the first stage, we design a knowledge compilation language to guide the search and avoid exhaustive enumeration of possible outputs. 
This language augments Algebraic Decision Diagrams (ADDs), an influential representation, with conjunctive decomposition.
For the second stage, instead of performing model counting queries individually, we leverage shared component caching across successive queries.
Moreover, we exploit literal equivalence to pre-process the formula corresponding to a given program.
By integrating these techniques, we develop a Precise Shannon Entropy tool PSE.
We conducted an extensive experimental evaluation over a comprehensive set of benchmarks (459 in total) and compared PSE with the existing precise Shannon entropy computing methods and the current state-of-the-art Shannon entropy estimation tool, EntropyEstimation. 
Our experiments indicate that EntropyEstimation is able to solve 276 instances, whereas PSE surpasses this by solving an additional 56 instances.
Among the benchmarks that were solved by both PSE and EntropyEstimation, PSE is at least $10\times$ as efficient as EntropyEstimation in 98\% of these benchmarks. 

The remainder of this paper is organized as follows. 
Section \ref{sec:Notation} introduces the notation and provides essential background information.
Section \ref{sec:ADDAND} introduces Algebraic Decision Diagrams with conjunctive decomposition (\ADDAND).
Section \ref{sec:PSE} discusses the application of \ADDAND to QIF and introduces our precise entropy tool, PSE. 
Section \ref{sec:Experiments} details the experimental setup, results, and analysis.
Section \ref{sec:Related} reviews related work.
Finally, Section \ref{sec:Conclusion} concludes the paper.

\section{Notations and Background}
\label{sec:Notation}

In this paper, we focus on the programs modeled by (Boolean) formulas.
In the formulas discussed, the symbols $x$ and $y$ denote variables, and literal $l$ refers to either the variable $x$ or its negation $\neg x$, where $var(l)$ represents the variable underlying the literal $l$.
A formula $\varphi$ is constructed from the constants $\mathit{true}$, $\mathit{false}$ and variables using negation operator $\neg$, conjunction operator $\wedge$, disjunction operator $\vee$, implication operator $\rightarrow$, and equality operator $\leftrightarrow$, where $\mathit{Vars}(\varphi)$ denotes the set of variables appearing in $\varphi$.
A clause $C$ (resp. term $T$) is a set of literals representing their disjunction (resp. conjunction).
A formula in conjunctive normal form (CNF) is a set of clauses representing their conjunction.
Given a formula $\varphi$, a variable $x$, and a constant $b$, the substitution $\varphi[x \mapsto b]$ refers to the transformed formula obtained by substituting the occurrence of $x$ with $b$ throughout $\varphi$.

An assignment $\sigma$ over variable set $V$ is a mapping from $V$ to $\{ \mathit{false}, \mathit{true} \}$.
The set of all assignments over $V$ is denoted by $2^V$.
Given a subset $V' \subseteq V$, $\sigma_{\downarrow V'} = \{x \mapsto b \in \sigma \mid x \in V'\}$.
Given a formula $\varphi$, an assignment over $\mathit{Vars}(\varphi)$ satisfies $\varphi$ ($\sigma \models \varphi$) if the substitution $\varphi[\sigma]$ is equivalent to $\mathit{true}$.
Given an assignment $\sigma$, if all variables are assigned a value in $\{\mathit{false}, \mathit{true}\}$, then $\sigma$ is referred to as a complete assignment. Otherwise it is a partial assignment.
A satisfying complete assignment is also called solution or model.
We use $\mathit{Sol}(\varphi)$ to the set of solutions of $\varphi$, and model counting is the problem of computing $|\mathit{Sol}(\varphi)|$.   
Given two formulas $\varphi$ and $\psi$ over $V$, $\varphi \models \psi$ iff $\mathit{Sol}( \varphi \land \lnot\psi) = \emptyset$.

\subsection{Circuit formula and its Shannon entropy}
Given a formula $\varphi(X, Y)$ to represent the relationship between input variables $X$ and output variables $Y$, if $\sigma_{\downarrow X} = \sigma_{\downarrow X}'$ implies $\sigma = \sigma'$ for each $\sigma,\sigma' \in \mathit{Sol}(\varphi)$, then $\varphi$ is referred to as a circuit formula.
It is standard in the security community to employ circuit formulas to model programs in QIF~\cite{golia2022scalable}.

\begin{example}\label{hard-circuit-example}
	The following formula is a circuit formula with input variables $X = \{x_1, \ldots, x_{2n} \}$ and output variables $Y = \{ y_1, \ldots, y_{2n}\}$:
	$\varphi_{n}^{\mathit{sep}} = \bigwedge_{i=1}^{n}(x_i \land x_{n+i} \rightarrow y_i \land y_{n+i})    
	\wedge (\lnot x_i \lor \lnot x_{n+i} \rightarrow \lnot y_i \land \lnot y_{n+i})$.
\end{example}

    In the computation of Shannon entropy, we focus on the probability distribution of outputs.
	Let $p$ denote a probability distribution defined over the set $\{ \mathit{false}, \mathit{true} \}^Y$.
	For each assignment $\sigma$ to $Y$, i.e., $\sigma:Y \mapsto \{ \mathit{false}, \mathit{true} \}$, its weight and probability is defined as $\omega_{\sigma} = \left| \mathit{Sol}(\varphi(Y \mapsto \sigma)) \right|$ and $p_{\sigma} = \frac{\left| \mathit{Sol}(\varphi(Y \mapsto \sigma)) \right|}{ \left| \mathit{Sol}(\varphi)_{\downarrow X} \right| }$, respectively, where $\mathit{Sol}(\varphi(Y \mapsto \sigma))$ denotes the set of solutions of $\varphi(Y \mapsto \sigma)$ and $\mathit{Sol}(\varphi)_{\downarrow X}$ denotes the set of solutions of $\varphi$ projected to $X$.
	Since $\varphi$ is a circuit formula, it is easy to prove that $ \left| \mathit{Sol}(\varphi)_{\downarrow X} \right| = \left| \mathit{Sol}(\varphi)  \right|$.
	Then, the entropy of $\varphi$ is $H(\varphi) = \sum_{\sigma \in 2^Y} -p_{\sigma} \log {p_{\sigma}}$.
	Following the convention in QIF \cite {smith2009foundations}, we use base 2 for log, though the base can be chosen freely.

\subsection{Knowledge compilation}

Knowledge compilation is the approach of compiling CNF formulas into a form to support tractable reasoning tasks such as satisfiability check, equivalence check, and model counting~\cite{darwiche2002knowledge}.
Ordered binary decision diagram (OBDD)~\cite{bryant1986graph} is one of the most influential knowledge compilation forms, which supports many tractable reasoning tasks.
Each OBDD is a rooted directed acyclic graph (DAG) defined over a linear ordering of variables $\prec$. 
Each internal node $v$ is called decision node and has two outgoing edges, referred to as the low child $lo(u)$ and the high child $hi(u)$, which are typically represented by dashed and solid lines, respectively. 
Every node $u$ is labeled with a symbol $sym(u)$. 
If $u$ is a terminal node, then $sym(u) = \bot$ or $\top$, representing the Boolean constants $\mathit{false}$ and $\mathit{true}$, respectively. 
Otherwise, $sym(u)$ denotes a variable and $u$ represents $(\neg sym(u) \land \psi) \lor (sym(u) \land \psi')$, where $\psi$ and $\psi'$ are the formulas represented by $lo(u)$ and $hi(u)$, respectively.
Each decision node $v$ and its parent $u$ have $sym(u) \prec sym(v)$.
\OBDDAND~\cite{lai2017new} is an extended form of OBDD with better space efficiency.
It augments OBDD with conjunctive decomposition nodes.
Each conjunctive decomposition node $u$ has a set of children $Ch(u)$ representing formulas without shared variables, and $u$ represents a conjunction of the formulas represented by its children.
\OBDDAND also supports a set of tractable reasoning tasks, including model counting and equivalence check.

Both OBDD and \OBDDAND can only represent Boolean functions.
An Algebraic Decision Diagram (ADD)~\cite{bahar1997algebric} is an extension of OBDD to represent algebraic functions.
ADD is a compact representation of a real-valued function as a directed acyclic graph. 
While OBDD has two terminal nodes representing $\mathit{false}$ and $\mathit{true}$, ADD includes multiple terminal nodes, each assigned a real value.
The order in which decision node labels appear in all paths from the root to the terminal nodes of the ADD also align with a given ordering of variables $\prec$.
Given an assignment $\sigma$ with each variable in $\prec$, we can obtain a path in a top-down way as follows: for a decision node with $x$, we pick low child if $\sigma(x) = \mathit{false}$, and high child otherwise. ADD maps $\sigma$ to the value on the terminate node of the path. 
The original design motivation for ADD was to solve matrix multiplication, shortest path algorithms, and direct methods for numerical linear algebra~\cite{bahar1997algebric}.
In subsequent research, ADD has also been used for stochastic model checking~\cite{kwiatkowska2007stochastic}, stochastic programming~\cite{hoey2013spudd}, and weighted model counting~\cite{dudek2020addmc,lai2025pbcounter}.

\section{\ADDAND: A New Tractable Representation}
\label{sec:ADDAND}

In order to compute the Shannon entropy of a circuit formula $\varphi(X, Y)$, we need to use the probability distribution over the outputs.
Algebraic Decision Diagrams (ADDs) are an influential compact probability representation that can be exponentially smaller than the explicit representation.
Macii and Poncino~\cite{macii1996exact} showed that ADD supports efficient exact computation of entropy.
However, we observed in the experiments that the sizes of ADDs often exponentially explode with large circuit formulas.
We draw inspiration from a Boolean representation known as the Ordered Binary Decision Diagram with conjunctive decomposition (\OBDDAND)~\cite{lai2017new}, which reduces its size through recursive component decomposition and divide-and-conquer strategies. 
This approach enables the representation to be exponentially smaller than the original OBDD.
Accordingly, we propose a probabilistic representation called Algebraic Decision Diagrams with conjunctive decomposition (\ADDAND) and demonstrate it supports tractable entropy computation.
\ADDAND is a general form of ADD and is defined as follows:

\begin{definition}\label{ADDAND-definition}
	An \ADDAND is a rooted DAG, where each node $u$ is labeled with a symbol $sym(u)$.
	If $u$ is a terminal node, $sym(u)$ is a non-negative real weight, also denoted by $\omega(u)$; otherwise, $sym(u)$ is a variable (called \emph{decision} node) or operator $\wedge$ (called \emph{decomposition} node). 
	The children of a decision node $u$ are referred to as the \emph{low} child $lo(u)$ and the \emph{high} child $hi(u)$, and connected by dashed lines and solid lines, respectively, corresponding to the cases where $\mathit{var}(u)$ is assigned the value of $\mathit{false}$ and  $\mathit{true}$. 
	For a decomposition node, its sub-graphs do not share any variables.
	An \ADDAND is imposed with a linear ordering $\prec$ of variables such that given a node $u$ and its non-terminal child $v$, $\mathit{var}(u) \prec \mathit{var}(v) $.
\end{definition}

Hereafter, we denote the set of variables that appear in the graph rooted at $u$ as $\mathit{Vars}(u)$ and the set of child nodes of $u$ as $Ch(u)$.
We now turn to show how an \ADDAND defines a probability distribution:

\begin{definition}\label{def:ADDAND-weight}
	Let $u$ be an \ADDAND node over a set of variables $Y$ and let $\sigma$ be an assignment over $Y$. 
	The weight of $\sigma$ is defined as follows:
	\begin{equation*}
		\omega(\sigma,u) =  
		\begin{cases}  
			\mathit{\omega}(u) & \text{terminal} \\  
			\prod_{v \in Ch(u)}{\omega(\sigma,v)} & \text{decomposition} \\  
			\omega(\sigma,lo(u)) & \text{decision and $\sigma \models  \lnot \mathit{var}(u)$}  \\
			\omega(\sigma,hi(u)) & \text{decision and $\sigma \models  \mathit{var}(u)$}  \\
		\end{cases}
	\end{equation*}
	The weight of an non-terminal \ADDAND rooted at $u$ is denoted by $\omega(u)$ and defined as $\sum_{\sigma \in 2^{\mathit{Vars}(u)}}\omega(\sigma,u)$.
	For nodes with a non-zero weight, the probability of $\sigma$ over $u$ is defined as $p(\sigma, u) = \frac{\omega(\sigma,u)}{\omega(u)}$.
\end{definition}

Figure \ref{fig:ADDAND-Example-hard} depicts an \ADDAND representing the probability distribution of $\varphi_n^{sep}$ in Example \ref{hard-circuit-example} over its outputs with respect to $y_1 \prec y_2 \prec \cdots \prec y_{2n}$. The reader can verify that each equivalent ADD with respect to $\prec$ has an exponential number of nodes.
In the field of knowledge compilation~\cite{darwiche2002knowledge,fargier2014knowledge}, the concept of succinctness is often used to describe the space efficiency of a representation. 
Based on the following observations, we can conclude that \ADDAND is strictly more succinct than ADD. 
First, OBDD and \OBDDAND are subsets of ADD and \ADDAND, respectively. 
Second, \OBDDAND is strictly more succinct than OBDD~\cite{lai2017new}. 
Finally, each \OBDDAND cannot be transformed into a non-OBDD ADD.

\begin{figure}[h]
	\vspace{1.5cm} 
	\centering
	\resizebox{0.8\linewidth}{!} {
		\begin{forest}
			for tree={
				scale=0.6, 
				if n children=0{circle, draw, inner sep=2pt}{},
				if level=1{edge={draw, solid}}{
					if n=1{edge={draw, dashed}}{edge={draw, solid}}
				}
			}
			[$\wedge$, circle, draw, minimum width=1cm,
			label={[red,font=\footnotesize,above=0.01cm]above:$\scalebox{0.6}{$\overbrace{(-\frac{3}{4} \cdot \log\frac{3}{4} - \frac{1}{4} \cdot \log\frac{1}{4}) + \cdots + (-\frac{3}{4} \cdot \log\frac{3}{4} - \frac{1}{4} \cdot \log\frac{1}{4})}^{n}$}$},label={[blue,font=\footnotesize,left=0.01cm]left:$\scalebox{0.7}{$4^n$}$}
			[$y_1$, circle, draw, minimum width=1cm,
			label={[red,font=\footnotesize,above=0.1cm]above:$\scalebox{0.7}{$-\frac{3}{4} \cdot \log\frac{3}{4} - \frac{1}{4} \cdot \log\frac{1}{4}$}$},label={[blue,font=\footnotesize,left=0.01cm]left:$\scalebox{0.7}{$4$}$},
			name = y1
			[$y_{n+1}$, circle, draw, minimum width=1cm,
			label={[red,font=\footnotesize,right=0.1cm]right:$\scalebox{0.7}{$0$}$},label={[blue,font=\footnotesize,left=0.01cm]left:$\scalebox{0.7}{$3$}$}
			[$3$, rectangle, draw, minimum width=1cm, minimum height=1cm,
			label={[red,font=\footnotesize,below=0.1cm]below:$\scalebox{0.7}{$0$}$}]
			[$0$, rectangle, draw, minimum width=1cm, minimum height=1cm,
			label={[red,font=\footnotesize,below=0.1cm]below:$\scalebox{0.7}{$0$}$}]
			]
			[$y_{n+1}$, circle, draw, minimum width=1cm,
			label={[red,font=\footnotesize,right=0.1cm]right:$\scalebox{0.7}{$0$}$},label={[blue,font=\footnotesize,left=0.01cm]left:$\scalebox{0.7}{$1$}$}
			[$0$, rectangle, draw, minimum width=1cm, minimum height=1cm,
			label={[red,font=\footnotesize,below=0.1cm]below:$\scalebox{0.7}{$0$}$}]
			[$1$, rectangle, draw, minimum width=1cm, minimum height=1cm,
			label={[red,font=\footnotesize,below=0.1cm]below:$\scalebox{0.7}{$0$}$}]
			]
			]	
			[$y_n$, circle, draw, minimum width=1cm,
			label={[red,font=\footnotesize,above=0.1cm]above:$\scalebox{0.7}{$-\frac{3}{4} \cdot \log\frac{3}{4} - \frac{1}{4} \cdot \log\frac{1}{4}$}$},label={[blue,font=\footnotesize,left=0.01cm]left:$\scalebox{0.7}{$4$}$}, name = yn
			[$y_{2n}$, circle, draw, minimum width=1cm,
			label={[red,font=\footnotesize,right=0.1cm]right:$\scalebox{0.7}{$0$}$},label={[blue,font=\footnotesize,left=0.01cm]left:$\scalebox{0.7}{$3$}$}
			[3, rectangle, draw, minimum width=1cm, minimum height=1cm,
			label={[red,font=\footnotesize,below=0.1cm]below:$\scalebox{0.7}{$0$}$}]
			[0, rectangle, draw, minimum width=1cm, minimum height=1cm,
			label={[red,font=\footnotesize,below=0.1cm]below:$\scalebox{0.7}{$0$}$}]
			]
			[$y_{2n}$, circle, draw, minimum width=1cm,
			label={[red,font=\footnotesize,right=0.1cm]right:$\scalebox{0.7}{$0$}$},label={[blue,font=\footnotesize,left=0.01cm]left:$\scalebox{0.7}{$1$}$}
			[0, rectangle, draw, minimum width=1cm, minimum height=1cm,
			label={[red,font=\footnotesize,below=0.1cm]below:$\scalebox{0.7}{$0$}$}]
			[1, rectangle, draw, minimum width=1cm, minimum height=1cm,
			label={[red,font=\footnotesize,below=0.1cm]below:$\scalebox{0.7}{$0$}$}]
			]
			]
			]
			\begin{tikzpicture}[overlay]
				\path (y1.south) -- (yn.north) node[pos=0.5, fill=white] {$\cdots$};
			\end{tikzpicture}
		\end{forest}
	}

	\caption{An \ADDAND representing the probability distribution of $\varphi_n^{sep}$ in Example \ref{hard-circuit-example} over its outputs. 
		According to Proposition \ref{prop:omega-proposition}, the computed weight for each node is marked in blue font.
		According to Proposition \ref{prop:Entropy-proposition}, the computed entropy for each node is marked in red font.
	}
	\label{fig:ADDAND-Example-hard}
\end{figure}

\subsection{Tractable Computation of Weight and Entropy}

The computation of Shannon entropy for an \ADDAND relies on its weight.
We first demonstrate that, for an \ADDAND node $u$, its weight $\mathit{\omega}(u)$ can be computed in polynomial time.
\begin{proposition}\label{prop:omega-proposition}
	Given a non-terminal node $u$ in \ADDAND, its weight $\mathit{\omega}(u)$ can be recursively computed as follows in polynomial time:
	\begin{equation*}
		\mathit{\omega}(u) =  
		\begin{cases}  
			\prod_{v \in Ch(u)}{\mathit{\omega}(v)}  & \text{decomposition}   \\
			2^{n_0} \cdot \mathit{\omega}(lo(u)) + 2^{n_1} \cdot \mathit{\omega}(hi(u))  & \text{decision}
			
		\end{cases}
	\end{equation*}
	where $n_0 = |\mathit{Vars}(u)| - |\mathit{Vars}(lo(u))| - 1 $ and $n_1 = |\mathit{Vars}(u)| - |\mathit{Vars}(hi(u))| - 1$.
	
	\begin{proof}
		The time complexity is immediate by using dynamic programming.
		We prove the equation can compute the weight correctly by induction on the number of variables of the \ADDAND rooted at $u$.
		It is obvious that the weight of a terminal node is the real value labeled. 
		For the case of the $\wedge$ node, since the variables of the child nodes are all disjoint, it can be easily seen from Definition \ref{def:ADDAND-weight}.
		Next, we will prove the case of the decision node.
		Assume that when $|\mathit{Vars}(u)| \le n$, this proposition holds. 
		For the case where $|\mathit{Vars}(u)| = n + 1$, we use $Y_0$ and $Y_1$ to denote $\mathit{Vars}(lo(u))$ and $\mathit{Vars}(hi(u))$, and we have $|Y_0| \le n$ and $|Y_1| \le n$.
		Thus, $\mathit{\omega}(lo(u))$ and $\mathit{\omega}(hi(u))$ can be computed correctly.
		According to Definition \ref{def:ADDAND-weight}, $w(u) = \sum_{\sigma \in 2^{\mathit{Vars}(u)}}\omega(\sigma,u)$.
		The assignments over $\mathit{Vars}(u)$ can be divided into two categories: 
		\begin{itemize}
			\item The assignment $\sigma \models \lnot \mathit{var}(u)$: 
			It is obvious that $\omega(\sigma, u) = \omega(\sigma_{\downarrow Y_0}, lo(u))$. 
			Each assignment over $Y_0$ can be extended to exactly $2^{n_0}$ different assignments over $\mathit{Vars}(u)$ in this category. Thus, we have the following equation:
			$$\sum_{\sigma \in 2^{\mathit{Vars}(u)} \land \sigma \models \lnot \mathit{var}(u)}\omega(\sigma, u) = 2^{n_0} \cdot \mathit{\omega}(lo(u)).$$
			\item The assignment $\sigma \models \mathit{var}(u)$: This case is similar to the above case.
		\end{itemize}
		To sum up, we can obtain that $\mathit{\omega}(u) = 2^{n_0} \cdot \mathit{\omega}(lo(u)) + 2^{n_1} \cdot \mathit{\omega}(hi(u))$.	
	\end{proof}
	
\end{proposition}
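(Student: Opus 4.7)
The plan is to handle the two assertions of the proposition separately. The polynomial-time claim is a dynamic-programming observation: an \ADDAND is a DAG, so processing nodes in reverse topological order and memoizing $\omega(u)$ at each node suffices. A decomposition node requires a product over its children and a decision node requires constant arithmetic on two already-computed child values, so the total work is polynomial in the size of the graph (modulo the bit-length of the numerical weights).

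For correctness I would proceed by strong induction on $|\mathit{Vars}(u)|$. The base case is a terminal node, where the weight is the real value labeled at the node by Definition \ref{def:ADDAND-weight}. In the inductive step I split on the type of the root. For a decomposition node, the defining property of \ADDAND guarantees that the sets $\mathit{Vars}(v)$ for $v \in Ch(u)$ are pairwise disjoint; together they cover $\mathit{Vars}(u)$. Hence every $\sigma \in 2^{\mathit{Vars}(u)}$ decomposes uniquely into a tuple of restrictions, $\omega(\sigma,u)$ factors as the corresponding product by Definition \ref{def:ADDAND-weight}, and swapping the outer sum over $\sigma$ with the product over $Ch(u)$ (valid because the index sets are disjoint) gives $\omega(u) = \prod_{v \in Ch(u)} \omega(v)$. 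The inductive hypothesis applies to each $v$ since $|\mathit{Vars}(v)| < |\mathit{Vars}(u)|$.

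For a decision node I would partition $2^{\mathit{Vars}(u)}$ according to the value assigned to $\mathit{var}(u)$ and treat the two halves symmetrically. The ordering constraint forces $\mathit{var}(u) \notin \mathit{Vars}(lo(u)) \cup \mathit{Vars}(hi(u))$, so letting $Z_0 := \mathit{Vars}(u) \setminus (\mathit{Vars}(lo(u)) \cup \{\mathit{var}(u)\})$ we have $|Z_0| = n_0$. On the falsifying half, $\omega(\sigma,u)$ depends only on $\sigma_{\downarrow \mathit{Vars}(lo(u))}$ by Definition \ref{def:ADDAND-weight}, so each assignment over $\mathit{Vars}(lo(u))$ extends to exactly $2^{n_0}$ assignments in this half that share the same $\omega$-value, contributing $2^{n_0}\,\omega(lo(u))$. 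The satisfying half is symmetric and contributes $2^{n_1}\,\omega(hi(u))$; summing yields the claimed formula, with the inductive hypothesis supplying correctness of $\omega(lo(u))$ and $\omega(hi(u))$.

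The main obstacle I anticipate is precisely this bookkeeping: $\mathit{Vars}(lo(u))$ and $\mathit{Vars}(hi(u))$ need not coincide and may each be a strict subset of $\mathit{Vars}(u) \setminus \{\mathit{var}(u)\}$. The factors $2^{n_0}$ and $2^{n_1}$ compensate for variables that are free on one branch while present in $\mathit{Vars}(u)$ globally, so the counting step must carefully track which variables are unconstrained on each side rather than silently assume they match.
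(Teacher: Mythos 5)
Your proposal is correct and follows essentially the same route as the paper's proof: strong induction on $|\mathit{Vars}(u)|$, a case split on node type, and for decision nodes a partition of $2^{\mathit{Vars}(u)}$ by the value of $\mathit{var}(u)$ with the $2^{n_0}$ and $2^{n_1}$ factors accounting for the extensions of assignments over $\mathit{Vars}(lo(u))$ and $\mathit{Vars}(hi(u))$. Your explicit bookkeeping of the unconstrained variables $Z_0$ and the sum--product interchange for decomposition nodes merely spells out details the paper leaves implicit.
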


We now present how \ADDAND computes Shannon entropy in polynomial time.
\begin{proposition}\label{prop:Entropy-proposition}
	Given an \ADDAND rooted at $u$, if $\omega(u) = 0$, we define its entropy $\mathit{H}(u)$ as $0$, and otherwise its entropy can be recursively computed in polynomial time as follows:
	\begin{equation*}
		\mathit{H}(u) =  
		\begin{cases}  
			0 & \text{terminal}   \\
			\sum_{v \in Ch(u)}{\mathit{H}(v)}  & \text{decomposition}   \\
			\begin{gathered}	
				p_0 \cdot (H(lo(u)) + n_0 - \log p_0) + p_1 \cdot (H(hi(u)) + n_1 - \log p_1)
			\end{gathered} & \text{decision}
			
		\end{cases}
	\end{equation*}
	where  $n_0 = |\mathit{Vars}(u)| - |\mathit{Vars}(lo(u))| - 1 $, $n_1 = |\mathit{Vars}(u)| - |\mathit{Vars}(hi(u))| - 1$, $p_{0} = \frac{2^{n_0} \cdot \mathit{\omega}(lo(u))}{\omega(u)}$, and $p_{1} =  \frac{ 2^{n_1} \cdot \mathit{\omega}(hi(u))}{\omega(u)}$.

	\begin{proof}
		According to Proposition \ref{prop:omega-proposition}, $\omega(u)$ can be computed in polynomial time, and therefore the time complexity in this proposition is obvious. 
		Next we prove the correctness of the computation method.
		The case of terminal nodes is obviously correct. The case of decomposition follows directly from the additivity property of entropy. Next, we show the correctness of the case of decision.
		
		Let $H_0(u)$ be $-\sum_{\sigma \models \lnot sym(u)} p(u,\sigma) \log p(u, \sigma)$ and $H_1(u)$ be $-\sum_{\sigma \models sym(u)} p(u,\sigma) \log p(u,\sigma)$. Similar to proposition \ref{prop:omega-proposition}, we can obtain $H(u) = 2^{n_0} \cdot H_0(u) + 2^{n_1} \cdot H_1(u)$. 
		The assignments over $\mathit{Vars}(u)$ can be divided into two categories: 
		\begin{itemize}
			\item The assignment $\sigma \models \lnot \mathit{var}(u)$: 
			According to definition \ref{def:ADDAND-weight}, the probability $p(\sigma,\mathit{\omega})$ satisfies $p(\sigma,\mathit{\omega}) = \frac{\mathit{\omega}(\sigma,lo(u))}{\mathit{\omega}(u)}$.
			Given $p_{0} = \frac{2^{n_0} \cdot \mathit{\omega}(lo(u))}{\omega(u)}$, it follows that $\frac{\mathit{\omega}(lo(u))}{\mathit{\omega}(u)} = \frac{p_0}{2^{n_0}}$.
			Substituting this into the expression for $p(\sigma,u)$, we derive $p(\sigma,u) =\frac{\mathit{\omega}(\sigma,lo(u))}{\mathit{\omega}(lo(u)} \cdot p_0 \cdot 2^{-n_0} = p(\sigma,lo(u)) \cdot p_0 \cdot 2^{-n_0}$.
			$H_0(u)$ then expands as $H_0(u) = -\sum_{\sigma \models \lnot sym(u)} p(\sigma,u) \log p(\sigma,u) = -\sum_{\sigma \models \lnot sym(u)} p(\sigma,lo(u)) \cdot p_0 \cdot 2^{-n_0} \cdot (\log p(\sigma,lo(u)) + \log p_0 - n_0) = p_0 \cdot 2^{-n_0} \cdot [-\log p_0 \cdot \sum_{\sigma \models \lnot sym(u)} p(\sigma,lo(u)) + n_0 \cdot \sum_{\sigma \models \lnot sym(u)} p(\sigma,lo(u)) - \sum_{\sigma \models \lnot sym(u)} p(\sigma,lo(u)) \cdot  \log p(\sigma,lo(u))]$.
			Noting that $\sum_{\sigma \models \lnot sym(u)} p(\sigma,lo(u)) = 1, - \sum_{\sigma \models \lnot sym(u)} p(\sigma,lo(u)) \cdot  \log p(\sigma,lo(u)) = H(lo(u))$,
			we simplify $H_0(u) = p_0 \cdot 2^{-n_0} \cdot (-\log p_0 + n_0 + H(lo(u)))$.
			\item The assignment $\sigma \models \mathit{var}(u)$: This case is similar to the above case. It is easy to obtain $H_1(u) = p_1 \cdot 2^{-n_1} \cdot (-\log p_1 + n_1 + H(hi(u)))$.
		\end{itemize}
		To sum up, we can obtain that $H(u) = p_0 \cdot (H(lo(u)) + n_0 - \log p_0) + p_1 \cdot (H(hi(u)) + n_1 - \log p_1)$
	\end{proof}
	
\end{proposition}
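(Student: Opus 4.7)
My approach would be structural induction on the \ADDAND DAG, using $|\mathit{Vars}(u)|$ as the induction parameter (equivalently, induction on depth after reducing the DAG to a tree via memoization). The polynomial-time bound is immediate from bottom-up dynamic programming once correctness is established: each node is visited once, and Proposition \ref{prop:omega-proposition} already furnishes $\omega(u)$, $\omega(lo(u))$, and $\omega(hi(u))$ in polynomial time, so all ingredients of the recursion are computable efficiently. The substance of the proof is therefore verifying the three-case recurrence.

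For the base case (terminal $u$), one has $\mathit{Vars}(u) = \emptyset$, so there is a single ``assignment'' (the empty one) whose probability is $1$, and $H(u) = -1 \cdot \log 1 = 0$. For the decomposition case, since children of a $\wedge$-node have pairwise disjoint variable sets, Definition \ref{def:ADDAND-weight} gives $\omega(\sigma, u) = \prod_{v \in Ch(u)} \omega(\sigma_{\downarrow \mathit{Vars}(v)}, v)$ and, after normalization by $\omega(u) = \prod_v \omega(v)$, the joint probability $p(\sigma, u)$ factors as a product of independent marginals $p(\sigma_{\downarrow \mathit{Vars}(v)}, v)$. The entropy of a product of independent distributions is the sum of their entropies, so $H(u) = \sum_{v \in Ch(u)} H(v)$ follows directly.

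The decision case is the real work. I would apply the chain rule $H(u) = H(B) + p_0\, H(u \mid B=0) + p_1\, H(u \mid B=1)$, where $B$ is the random value of $\mathit{var}(u)$ under the distribution induced by $u$. The branching entropy $H(B)$ contributes the two $-p_b \log p_b$ terms. Conditional on $B = 0$, the distribution on $\mathit{Vars}(u) \setminus \{\mathit{var}(u)\}$ equals the distribution induced by $lo(u)$ on $\mathit{Vars}(lo(u))$, extended uniformly and independently over the $n_0$ ``skipped'' variables in $\mathit{Vars}(u) \setminus (\mathit{Vars}(lo(u)) \cup \{\mathit{var}(u)\})$, because $\omega(\sigma, lo(u))$ does not depend on those coordinates. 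Independence of the uniform block from the $lo(u)$-distribution makes $H(u \mid B=0) = H(lo(u)) + n_0$, and symmetrically $H(u \mid B=1) = H(hi(u)) + n_1$. Collecting terms and grouping by branch yields the compact form $p_0(H(lo(u)) + n_0 - \log p_0) + p_1(H(hi(u)) + n_1 - \log p_1)$.

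The main obstacle will be justifying the decision case rigorously, specifically two points: first, that the numbers $p_0, p_1$ given by $2^{n_b} \omega(\cdot)/\omega(u)$ really are the marginal probabilities $\Pr[B = 0], \Pr[B = 1]$ under the distribution of $u$; and second, that the skipped variables contribute exactly $n_b$ bits of entropy. The first point is a direct consequence of the recurrence in Proposition \ref{prop:omega-proposition}; the second requires isolating the ``uniform block'' from the $lo(u)$-distribution and invoking additivity of entropy under independence. Once these two facts are established, the rearrangement into the stated closed form is straightforward algebra.
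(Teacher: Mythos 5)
Your proof is correct, and for the decision case it takes a genuinely different route from the paper's. The paper defines partial entropies $H_0(u)$ and $H_1(u)$ restricted to the assignments with $\mathit{var}(u)$ false resp.\ true, derives the pointwise identity $p(\sigma,u) = p(\sigma,lo(u))\cdot p_0\cdot 2^{-n_0}$, and then expands the resulting sums term by term, using $\sum_\sigma p(\sigma,lo(u)) = 1$ and the definition of $H(lo(u))$ to collect the pieces into the closed form. You instead invoke the chain rule $H(u) = H(B) + p_0 H(u\mid B=0) + p_1 H(u\mid B=1)$ for the decision bit $B$ and observe that the conditional distribution given $B=0$ factors as the $lo(u)$-distribution times an independent uniform block on the $n_0$ skipped variables, so $H(u\mid B=0) = H(lo(u)) + n_0$. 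The two arguments are computationally equivalent---your two ``main obstacles'' ($p_b$ being the true branch probabilities, and the skipped variables contributing exactly $n_b$ bits) are precisely the facts the paper establishes by its direct expansion---but your version is more conceptual: it explains where each term of the recurrence comes from ($-\log p_b$ from the branching entropy, $n_b$ from the uniform block, $H(lo(u))$ from the child), whereas the paper's algebraic expansion is more self-contained and avoids appealing to the chain rule and to additivity under independence as external facts. Your treatment of the terminal and decomposition cases matches the paper's (the paper likewise dismisses them via the single-assignment observation and the additivity of entropy over independent components), and your account of the polynomial-time bound via bottom-up dynamic programming on top of Proposition~\ref{prop:omega-proposition} is the same as the paper's.
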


We conclude this section by explaining why ordering is used in the design of \ADDAND.
In fact, Propositions \ref{prop:omega-proposition}--\ref{prop:Entropy-proposition} remain valid even when we use only the more general read-once property, where each variable appears at most once along any path from the root of an \ADDAND to a terminal node.
First, our experimental results indicate that the linear ordering determined by the minfill algorithm in our tool PSE outperforms the dynamic orderings employed in the state-of-the-art model counters, where the former imposes the orderedness and the latter imposes the read-once property.
Second, \ADDAND can provide tractable equivalence checking between probability distributions beyond this study.

\section{PSE: Scalable Precise Entropy Computation}
\label{sec:PSE}

In this section, we introduce our tool PSE, designed to compute the Shannon entropy of a given circuit CNF formula with respect to its output variables.
PSE, as presented in Algorithm \ref{PSE}, takes as input a CNF formula $\varphi$, an input set $X$, and an output set $Y$, and returns the Shannon entropy $\mathit{H}(\varphi)$ of the formula.
Like other tools for computing Shannon entropy, PSE follows a two-stage process: the $Y$-stage (corresponding to outputs) and the $X$-stage (corresponding to inputs).
In the $X$-stage (lines \ref{PSE-line:count}--\ref{PSE-line:count-return}), we perform multiple optimized model counting operations on sub-formulas over variables in $X$, where the leaves of \ADDAND are implicitly generated. 
The optimization technique is discussed in Section \ref{sec:implementation}.
In the $Y$-stage (the remaining lines), we conduct a search within the \ADDAND framework to precisely compute the Shannon entropy, where the internal nodes of \ADDAND are implicitly generated.
The following observation states the input of each recursive call is still a circuit formula and the two input formulas of a call corresponding to a decision node in \ADDAND have the same output variables.

\begin{observation}\label{prop:circuit-proposition}
	Given a circuit formula $\varphi(X, Y)$ and a partial assignment $\sigma$ without any input variables, we have the following properties:
	\begin{itemize}
		\item $\varphi[\sigma](X, Y \setminus \mathit{Vars}(\sigma))$ is a circuit formula;
		\item Each $\psi_i(X, Y \cap \mathit{Vars}(\psi_i))$ is a circuit formula if $\varphi = \bigwedge_{i = 1}^m\psi_i$ and for  
		$1 \le i \ne j  \le m$, $\mathit{Vars}(\psi_i) \cap \mathit{Vars}(\psi_j) = \emptyset$;
		\item If $\varphi[\sigma] \equiv \mathit{true}$, $\sigma$  contains each output variable.
	\end{itemize}
	
\end{observation}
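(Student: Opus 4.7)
The plan is to derive each of the three properties as a short consequence of the circuit-formula definition: whenever two solutions agree on the input variables, they must coincide. In each case I would take a hypothetical pair of solutions of the sub-formula that agree on the inputs, lift them to solutions of the original $\varphi$, and invoke the circuit property of $\varphi$ to force equality.

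For property (i), I would take $\tau, \tau' \in \mathit{Sol}(\varphi[\sigma])$ with $\tau_{\downarrow X} = \tau'_{\downarrow X}$. Because $\mathit{Vars}(\sigma) \subseteq Y$, the extensions $\tau \cup \sigma$ and $\tau' \cup \sigma$ are both solutions of $\varphi$, and their projections to $X$ coincide with $\tau_{\downarrow X}$. The circuit property of $\varphi$ then gives $\tau \cup \sigma = \tau' \cup \sigma$, and removing $\sigma$ yields $\tau = \tau'$.

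For property (ii), I would fix $i$ and suppose $\tau, \tau' \in \mathit{Sol}(\psi_i)$ agree on $X \cap \mathit{Vars}(\psi_i)$. Choosing any solution $\rho_j \in \mathit{Sol}(\psi_j)$ for each $j \ne i$, the variable-disjointness assumption makes $\tau \cup \bigcup_{j \ne i} \rho_j$ and $\tau' \cup \bigcup_{j \ne i} \rho_j$ two solutions of $\varphi$ that agree on $X$. The circuit property of $\varphi$ forces them to be identical, and restricting to $\mathit{Vars}(\psi_i)$ gives $\tau = \tau'$. The main subtlety sits here: the argument requires every $\psi_j$ to be satisfiable in order to produce the witnesses $\rho_j$. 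If some component were unsatisfiable, then $\varphi$ itself would be unsatisfiable and its circuit property would hold vacuously, in which case $\psi_i$ cannot inherit the property from $\varphi$; this case is avoided in PSE because the algorithm only decomposes satisfiable formulas.

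For property (iii), I would argue contrapositively: if some $y \in Y$ were absent from $\mathit{Vars}(\sigma)$, the assumption $\varphi[\sigma] \equiv \mathit{true}$ would make every completion of $\sigma$ a solution of $\varphi$. Fixing any assignment $\alpha$ to the remaining variables and flipping only $y$ produces two solutions of $\varphi$ that coincide on $X$ but differ at $y$, contradicting the circuit property of $\varphi$. Overall, items (i) and (iii) are clean lift-and-invoke arguments, while (ii) is the only step requiring care, precisely because of the satisfiability caveat discussed above.
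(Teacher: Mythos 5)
Your proof is correct and follows essentially the same route as the paper's: for each property you lift (or recombine) solutions into solutions of $\varphi$ and invoke its circuit property. Two small points of comparison are worth noting. For property (ii) the paper starts from two solutions of $\varphi$ and swaps their $\mathit{Vars}(\psi_i)$-parts, whereas you start from two solutions of $\psi_i$ and pad them with witnesses $\rho_j$ from the other components; these are the same mix-and-match idea, but your version makes the satisfiability requirement explicit. Your caveat there is in fact \emph{more} careful than the paper, which claims the first two properties ``obviously hold when $\varphi$ is unsatisfiable''---for property (ii) that is false (e.g.\ $\varphi=(x_1\land\lnot x_1)\land(y_1\lor y_2)$ is vacuously a circuit formula, yet the component $y_1\lor y_2$ is not), so your observation that the statement only holds for satisfiable $\varphi$ identifies a real gap in the paper's own argument. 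For property (iii) your contrapositive handles an arbitrary set of free variables, while the paper's proof only treats the case of a single free output variable; yours is the cleaner general form of the same flip-one-variable argument.
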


\begin{proof}
The first two properties obviously hold when $\varphi$ is unsatisfiable. Thereby, we assume $\varphi$ is satisfiable. 
For the first property, let $\varphi'$ be $\varphi[\sigma](X, Y \setminus \mathit{Vars}(\sigma))$.
For each $\sigma',\sigma'' \in \mathit{Sol}(\varphi)$, $\sigma'_{\downarrow X} = \sigma''_{\downarrow X}$ implies $\sigma' = \sigma''$.
$Sol(\varphi')$ can be seen as a subset of $Sol(\varphi)$.
Consequently, for each $\sigma',\sigma'' \in \mathit{Sol}(\varphi')$, $\sigma'_{\downarrow X} = \sigma''_{\downarrow X}$ still implies $\sigma' = \sigma''$, concluding that $\varphi'$ is also a circuit formula. 

For the second property, each solution of $\psi_i$ can be obtain from a solution of $\varphi$.
Let $\sigma',\sigma''$ be two solutions of $\varphi$.
We only need to prove that $\sigma'_{\downarrow {X \cap \mathit{Vars}(\psi_i)} } = \sigma''_{\downarrow {X \cap \mathit{Vars}(\psi_i)} }$ implies $\sigma'_{\downarrow {(X \cup Y) \cap \mathit{Vars}(\psi_i)} } = \sigma''_{\downarrow {(X \cup Y) \cap \mathit{Vars}(\psi_i)}}$.
We construct another solution of $\varphi$, $\sigma''' = \sigma''_{\downarrow {(X \cup Y) \cap \mathit{Vars}(\psi_i)}} \cup \sigma'_{\downarrow {(X \cup Y) \setminus \mathit{Vars}(\psi_i)}}$.
Then $\sigma' = \sigma'''$, which implies $\sigma'_{\downarrow {(X \cup Y) \cap \mathit{Vars}(\psi_i)} } = \sigma'''_{\downarrow {(X \cup Y) \cap \mathit{Vars}(\psi_i)}} = \sigma''_{\downarrow {(X \cup Y) \cap \mathit{Vars}(\psi_i)}}$.

We prove the last property by contradiction. 
Suppose that $\varphi[\sigma] \equiv \mathit{true}$, and $\sigma$ is a partial assignment with only one free variable $y \in Y$. 
Then the value of $y$ can take either $\mathit{false}$ or $\mathit{true}$.
That is, $\sigma \cup \{y = \mathit{false}\}$ and $\sigma \cup \{y = \mathit{true}\}$
are solutions of $\varphi$, which contradicts the definition of circuit formula. 
\end{proof}

\begin{algorithm}[h]
	\caption{PSE($\varphi$,$X$,$Y$)}
	\label{PSE}
	\LinesNumbered
    \DontPrintSemicolon
	\KwIn{A circuit CNF formula $\varphi$ with input variables $X$ and output variables $Y$}
	\KwOut{the entropy of $\varphi$}
	\lIf{$\mathit{Cache_H}(\varphi) \neq nil $} {\Return $\mathit{Cache_H}(\varphi)$}
	
	\If{$ Y = \emptyset $}
	{
		$\mathit{Cache_\#}(\varphi) \leftarrow $ \texttt{CountModels}($\varphi$) \label{PSE-line:count}
		
		\Return  $\mathit{Cache_H}(\varphi)  \leftarrow 0$	\label{PSE-line:count-return}
	}
	
	$ \Psi = \texttt{Decompose}(\varphi) $\label{PSE-line:Decompose}
	
	\If{$ |\Psi| > 1 $} 
	{
		$\mathit{Cache_H}(\varphi) \leftarrow \sum_{\psi \in \Psi}{\mathbf{PSE}(\psi,X,Y \cap \mathit{Vars}(\psi_i))}$
		
		$Cache_\#(\varphi) \leftarrow \prod_{\psi \in \Psi}{Cache_\#(\psi)}$
		
		\Return $\mathit{Cache_H}(\varphi) $
	}
	
	$ y \leftarrow \texttt{PickGoodVar}(Y)$
	
	$ \varphi_0 \leftarrow \varphi[y \mapsto \mathit{false}]$; $ \varphi_1 \leftarrow  \varphi[y \mapsto \mathit{true}]$
	
	$\mathit{H}_0 \leftarrow \mathbf{PSE}$($\varphi_0,X,Y \backslash \{y\}$)\;
	$\mathit{H}_1 \leftarrow  \mathbf{PSE}$($\varphi_1,X,Y \backslash \{y\}$) 
	
	$\mathit{Cache}_\#(\varphi) \leftarrow \mathit{Cache}_\#(\varphi_0)  +  \mathit{Cache}_\#(\varphi_1)$

	$p_0 = \frac{\mathit{Cache}_\#(\varphi_0)}{\mathit{Cache}_\#(\varphi)}$; $p_1 = \frac{\mathit{Cache}_\#(\varphi_1)}{\mathit{Cache}_\#(\varphi)}$

	$\mathit{H} \leftarrow p_0 \cdot (H_0 - \log p_0) + p_1 \cdot (H_1 - \log p_1)$
	
	\Return $\mathit{Cache_H}(\varphi)  \leftarrow \mathit{H}$
	
\end{algorithm}

In line 1, if the formula $\varphi$ is cached, its corresponding entropy is returned.
If the current set $Y$ is empty (in line 2), this indicates that a satisfiable assignment has been found under the restriction of the output set $Y$.
We do not explicitly handle the case where $\varphi$ evaluates to $\mathit{true}$, as this naturally implies that $Y$ is empty, as indicated by Observation \ref{prop:circuit-proposition}.
Consequently, the scenario in which the set $Y$ is empty inherently encompasses the case where $\varphi$ evaluates to $\mathit{true}$.
Lines 3–4 perform model counting on the residual formula and compute its entropy $\mathit{H}$, corresponding to the terminal case of Proposition \ref{prop:Entropy-proposition}.
We invoke the \texttt{Decompose} function in line 5 to determine whether the formula $\varphi$ can be decomposed into multiple components.
In lines 6–9, if $\varphi$ can be decomposed into multiple sub-components, we compute the model count and entropy of each component $\psi$, and subsequently derive the entropy of the formula $\varphi$. 
In this case, computing the model count and computing the entropy correspond respectively to the $\wedge$ cases in Propositions \ref{prop:omega-proposition} and \ref{prop:Entropy-proposition}.
When there is only one component, we select a variable from $Y$ in line 10.
The \texttt{PickGoodVar} function operates as a heuristic algorithm designed to select a variable from set $Y$, with the selection criteria determined by the specific heuristic employed.
Moving forward, line 11 generates the residual formulas $\varphi_0$ and  $\varphi_1$, corresponding to assigning the variable $y$ to $\mathit{false}$ and $\mathit{true}$, respectively.
Subsequently, lines 12 and 13 recursively compute the entropy $\mathit{H}$ for each derived formula.
Since $\varphi$ is a circuit formula, all residual formulas generated in the recursive process after making decisions on variables in $Y$ remain circuit formulas.
It follows from Observation \ref{prop:circuit-proposition} that when computing the Shannon entropy of the circuit formula, $n_0 = n_1 = 0$.
The model count of $\varphi$ is cached in line 14, corresponding to the decision node case in Proposition \ref{prop:omega-proposition}.
Finally, in lines 15–16, we compute the entropy of  $\varphi$ (corresponding to the third case in Proposition \ref{prop:Entropy-proposition}), store it in the cache, and return it as the result in line 17.

\begin{example}\label{easy-circuit-example}
	Consider the following circuit CNF formula with input variables $X = \{x_1, x_2, x_3, x_4 ,x_5 \}$ and output variables $Y = \{ y_1, y_2, y_3, y_4 , y_5\}$:
	\begin{equation*}
		\begin{split}
			\varphi(X, Y) = & (x_2 \vee x_3 \vee y_3) \wedge (\neg y_3 \vee \neg y_4) \wedge 
			(x_2 \vee y_3) \wedge (\neg x_2 \vee y_4) 
			\wedge (\neg x_1 \vee \neg y_1) \wedge (x_1 \vee y_1) \wedge 
			\\
			&  (\neg x_4 \vee x_5 \vee y_2) \wedge 
			(x_4 \vee \neg x_5 \vee y_2) \wedge
			(\neg x_4 \vee \neg x_5 \vee \neg y_2) \wedge
			(x_4 \vee x_5 \vee y_2) \wedge 
			\\ 
			& (\neg y_1 \vee \neg y_5) \wedge
			(y_1 \vee y_5) \wedge (y_1 \vee x_4 \vee x_5) \wedge (y_1 \vee y_3 \vee y_4)
		\end{split} 
	\end{equation*}
Figure \ref{fig:ADDAND-Example-easy} illustrates the execution trace of PSE taking in $\varphi$ with the variable ordering $y_1 \prec y_2 \prec y_3 \prec y_4 \prec y_5$, which is an implicit \ADDAND.
If we do not perform decomposition in line \ref{PSE-line:Decompose}, the search trace is depicted in Figure \ref{fig:ADD-Example-easy}, an ADD structure.
It is evident that \ADDAND and ADD yield consistent results, both in terms of Shannon entropy computation and model counting. 
After merging identical terminal nodes, the \ADDAND contains 14 nodes, which is fewer than the 24 nodes in the ADD.
A comparison between Figure \ref{fig:ADDAND-Example-easy} and Figure \ref{fig:ADD-Example-easy} demonstrates the succinctness of the \ADDAND structure.

\begin{figure}[h]
	\resizebox{\linewidth}{!} {
	\begin{tikzpicture}[
		node/.style={
			circle,
			draw=black, 
			solid, 
			minimum width=1cm,
			black
		},
		leaf/.style={
			rectangle,
			draw=black, 
			solid, 
			minimum size=1cm,
		},
		edge_solid/.style={
			solid,
			thick
		},
		edge_dashed/.style={
			dashed,
			thick
		},
		entropy_label/.style={
			red,
			align=center
		},
		weight_label/.style={
			blue,
			align=center
		},
		level distance=3cm,
		level 1/.style={sibling distance=8cm},
		level 2/.style={sibling distance=4cm},
		level 3/.style={sibling distance=2.3cm},
		level 4/.style={sibling distance=1.2cm},
		level 5/.style={sibling distance=1.2cm}
		]
		
		\node[node,label={[entropy_label]above:{$ \frac{3}{7}\cdot(-\frac{1}{3}\log\frac{1}{3} - \frac{2}{3}\log\frac{2}{3} -\log\frac{1}{2}) + \frac{4}{7}\cdot(-\log\frac{1}{2} -\frac{1}{4}\log\frac{1}{4} - \frac{3}{4}\log\frac{3}{4}) -\frac{3}{7}\log\frac{3}{7} - \frac{4}{7}\log\frac{4}{7}$}},label={[weight_label]right:{$28$}}] (y1) {$y_1$}
		child[edge_dashed] {
			node[node,label={[entropy_label]above:{$-\frac{1}{3}\log\frac{1}{3} - \frac{2}{3}\log\frac{2}{3} -\log\frac{1}{2}$}},label={[weight_label]right:{$12$}}] (wedge1) {$\wedge$}
			child[edge_solid] {
				node[node,label={[entropy_label]above:{$-\frac{1}{3}\log\frac{1}{3} - \frac{2}{3}\log\frac{2}{3}$}},label={[weight_label]right:{$3$}}] (y4a) {$y_2$}
				child[edge_dashed] { node[leaf] (leaf1) {1} }
				child[edge_solid] { node[leaf] (leaf2) {2} }
			}
			child[edge_solid] {
				node[node,label={[entropy_label]below:{$0$}},label={[weight_label]right:{$1$}}] (y4a) {$y_5$}
				child[edge_dashed] { node[leaf] (leaf3) {0} }
				child[edge_solid] { node[leaf] (leaf4) {1} }
			}
			child[edge_solid] {
				node[node,label={[entropy_label]above:{$-\frac{1}{2}\log\frac{1}{2} - \frac{1}{2}\log\frac{1}{2}$}},label={[weight_label]right:{$4$}}] (y2) {$y_3$}
				child[edge_dashed] {
					node[node,label={[entropy_label]above:{$0$}},label={[weight_label]right:{$2$}}] (y3a) {$y_4$}
					child[edge_dashed] { node[leaf] (leaf5) {0} }
					child[edge_solid] { node[leaf] (leaf6) {2} }
				}
				child {
					node[node,label={[entropy_label]above:{$0$}},label={[weight_label]left:{$2$}}] (y3b) {$y_4$}
					child[edge_dashed] { node[leaf] (leaf7) {2} }
					child[edge_solid] { node[leaf] (leaf8) {0} }
				}
			}
		}
		child {
			node[node,label={[entropy_label]above:{$-\log\frac{1}{2} -\frac{1}{4}\log\frac{1}{4} - \frac{3}{4}\log\frac{3}{4}$}},label={[weight_label]right:{$16$}}] (wedge2) {$\wedge$}
			child[edge_solid] { 
				node[node,edge_solid] at (y2) {}
			}
			child {
				node[node,label={[entropy_label]above:{$-\frac{1}{4}\log\frac{1}{4} - \frac{3}{4}\log\frac{3}{4}$}},label={[weight_label]right:{$4$}}] (y4b) {$y_2$}
				child[edge_dashed] { node[leaf] (leaf9) {1} }
				child[edge_solid] { node[leaf] (leaf10) {3} }
			}
			child[edge_solid] {
				node[node,label={[entropy_label]above:{$0$}},label={[weight_label]right:{$1$}}] (y4a) {$y_5$}
				child[edge_dashed] { node[leaf] (leaf11) {1} }
				child[edge_solid] { node[leaf] (leaf12) {0} }
			}
		};
		
		\foreach \n in {leaf1,leaf2,leaf3,leaf4,leaf5,leaf6,leaf7,leaf8,leaf9,leaf10,leaf11,leaf12} {
			\node[red, below=1mm] at (\n.south) {0};
		}
	\end{tikzpicture}    
	}
	\caption{The execution example of PSE on Example \ref{easy-circuit-example} follows the variable order of $y_1 \prec y_2 \prec y_3 \prec y_4 \prec y_5$, where the corresponding computational trajectory is represented as an \ADDAND. The entropy computation process performed by PSE is explicitly annotated in red font. 
	The calculation process of weight (number of models) is presented in blue font.
	}
	\label{fig:ADDAND-Example-easy}
\end{figure}

\begin{figure}[h]
	\resizebox{\linewidth}{!} {
		\begin{tikzpicture}[
			node/.style={
				circle,
				draw=black, 
				solid, 
				minimum width=1cm,
				black
			},
			leaf/.style={
				rectangle,
				draw=black, 
				solid, 
				minimum size=1cm,
			},
			edge_solid/.style={
				solid,
				thick
			},
			edge_dashed/.style={
				dashed,
				thick
			},
			weight_label/.style={
				blue,
				align=center
			},
			entropy_label/.style={
				red,
				align=center
			},
			level distance=3cm,
			level 1/.style={sibling distance=8cm},
			level 2/.style={sibling distance=4cm},
			level 3/.style={sibling distance=2.2cm},
			level 4/.style={sibling distance=3.2cm},
			level 5/.style={sibling distance=1.2cm},
			]
			
			\node[node,label={[entropy_label]above:{$ \frac{3}{7}\cdot(\frac{1}{3} + \frac{2}{3} -\frac{1}{3}\log\frac{1}{3} - \frac{2}{3}\log\frac{2}{3}) + \frac{4}{7}\cdot(\frac{1}{4} + \frac{3}{4} -\frac{1}{4}\log\frac{1}{4} - \frac{3}{4}\log\frac{3}{4}) -\frac{3}{7}\log\frac{3}{7} - \frac{4}{7}\log\frac{4}{7}$}},label={[weight_label]right:{$28$}}] (y1) {$y_1$}
			child[edge_dashed] {
				node[node,label={[entropy_label]above:{$\frac{1}{3} + \frac{2}{3} -\frac{1}{3}\log\frac{1}{3} - \frac{2}{3}\log\frac{2}{3}$}},label={[weight_label]right:{$12$}}] (y4a) {$y_2$}
				child[edge_dashed] {
					node[node,label={[entropy_label]above:{$-\frac{1}{2}\log\frac{1}{2} - \frac{1}{2}\log\frac{1}{2}$}},label={[weight_label]right:{$4$}}] (y2a) {$y_3$}
					child[edge_dashed] {
						node[node,label={[entropy_label]above:{$0$}},label={[weight_label]right:{$2$}}] (y3a) {$y_4$}
						child[edge_solid] {
							node[node,label={[entropy_label]above:{$0$}},label={[weight_label]below:{$2$}}] (y5b) {$y_5$}
							child[edge_dashed] { node[leaf] (leaf1) {0} }
							child[edge_solid] { node[leaf] (leaf2) {2} }
						}
						child[edge_dashed] {
							node[node,label={[entropy_label]above:{$0$}},label={[weight_label]below:{$0$}}] (y5a) {$y_5$}
							child[edge_dashed] { node[leaf] (leaf3) {0} }
							child[edge_solid] { node[leaf] (leaf4) {0} }
						}
					}
					child[edge_solid] {
						node[node,label={[entropy_label]above:{$0$}},label={[weight_label]right:{$2$}}] (y3b) {$y_4$}
						child[edge_dashed] {
							node[node,edge_solid] at (y5b) {}
						}
						child {
							node[node,edge_solid] at (y5a) {}
						}
					}
				}
				child[edge_solid] {
					node[node,label={[entropy_label]above:{$-\frac{1}{2}\log\frac{1}{2} - \frac{1}{2}\log\frac{1}{2}$}},label={[weight_label]right:{$8$}}] (y2) {$y_3$}
					child[edge_dashed] {
						node[node,label={[entropy_label]above:{$0$}},label={[weight_label]right:{$4$}}] (y3c) {$y_4$}
						child[edge_dashed] { 
							node[node,edge_solid] at (y5a) {}
						}
						child[edge_solid] { 
							node[node,label={[entropy_label]above:{$0$}},label={[weight_label]below:{$4$}}] (y5c) {$y_5$}
							child[edge_dashed] { node[leaf] (leaf5) {0} }
							child[edge_solid] { node[leaf] (leaf6) {4} }
						}
					}
					child {
						node[node,label={[entropy_label]above:{$0$}},label={[weight_label]right:{$4$}}] (y3d) {$y_4$}
						child[edge_dashed] { 
							node[node,edge_solid] at (y5c) {}
						}
						child[edge_solid] { 
							node[node,edge_solid] at (y5a) {}
						}
					}
				}
			}
			child {
				node[node,label={[entropy_label]above:{$\frac{1}{4} + \frac{3}{4} -\frac{1}{4}\log\frac{1}{4} - \frac{3}{4}\log\frac{3}{4}$}},label={[weight_label]right:{$16$}}] (y4b) {$y_2$}
				child[edge_dashed] {
					node[node,label={[entropy_label]above:{$-\frac{1}{2}\log\frac{1}{2} - \frac{1}{2}\log\frac{1}{2}$}},label={[weight_label]right:{$4$}}] (y2c) {$y_3$}
					child[edge_dashed] {
						node[node,label={[entropy_label]above:{$0$}},label={[weight_label]right:{$2$}}] (y3e) {$y_4$}
						child[edge_dashed] {
							node[node,edge_solid] at (y5a) {}
						}
						child[edge_solid] {
							node[node,label={[entropy_label]above:{$0$}},label={[weight_label]below:{$2$}}] (y5d) {$y_5$}
							child[edge_dashed] { node[leaf] (leaf7) {2} }
							child[edge_solid] { node[leaf] (leaf8) {0} }
						}
					}
					child[edge_solid] {
						node[node,label={[entropy_label]above:{$0$}},label={[weight_label]right:{$2$}}] (y3f) {$y_4$}
						child[edge_dashed] {
							node[node,edge_solid] at (y5d) {}
						}
						child {
							node[node,edge_solid] at (y5a) {}
						}
					}
				}
				child[edge_solid] {
					node[node,label={[entropy_label]above:{$-\frac{1}{2}\log\frac{1}{2} - \frac{1}{2}\log\frac{1}{2}$}},label={[weight_label]right:{$12$}}] (y2) {$y_3$}
					child[edge_dashed] {
						node[node,label={[entropy_label]above:{$0$}},label={[weight_label]right:{$6$}}] (y3g) {$y_4$}
						child[edge_dashed] { 
							node[node,edge_solid] at (y5a) {}
						}
						child[edge_solid] { 
							node[node,label={[entropy_label]above:{$0$}},label={[weight_label]below:{$6$}}] (y5e) {$y_5$}
							child[edge_dashed] { node[leaf] (leaf9) {6} }
							child[edge_solid] { node[leaf] (leaf10) {0} }
						}
					}
					child {
						node[node,label={[entropy_label]above:{$0$}},label={[weight_label]right:{$6$}}] (y3d) {$y_4$}
						child[edge_dashed] { 
							node[node,edge_solid] at (y5e) {}
						}
						child[edge_solid] { 
							node[node,edge_solid] at (y5a) {}
						}
					}
				}
			};
			\foreach \n in {leaf1,leaf2,leaf3,leaf4,leaf5,leaf6,leaf7,leaf8,leaf9,leaf10} {
				\node[red, below=1mm] at (\n.south) {0};
			}
		\end{tikzpicture}  
	}
	\caption{An ADD structure, constructed in Example \ref{easy-circuit-example}, follows the variable order of $y_1 \prec y_2 \prec y_3 \prec y_4 \prec y_5$. According to Proposition \ref{prop:omega-proposition}, weight is marked in blue font, and according to Proposition \ref{prop:Entropy-proposition}, entropy is marked in red font.
	}
	\label{fig:ADD-Example-easy}
\end{figure}
\end{example}

From the aforementioned example, we observe that the search space of PSE corresponds to an \ADDAND that represents the weights of assignments for the output variables.
Thus, Propositions \ref{prop:omega-proposition}--\ref{prop:Entropy-proposition} and Observation \ref{prop:circuit-proposition} ensure that the entropy of the original formula is obtained from the root call of PSE.

\subsection{Implementation}
\label{sec:implementation}

We now discuss the implementation details that are crucial for the runtime efficiency of PSE. 
Specifically, leveraging the tight interplay between entropy computation and model counting, our methodology integrates a variety of state-of-the-art techniques in model counting.

In the $X$-stage of algorithm \ref{PSE}, we have the option to employ various methodologies for the model counting query denoted by \texttt{CountModels} in line 3.
The first method involves individually employing state-of-the-art model counters, such as SharpSAT-TD~\cite{korhonen2021integrating}, Ganak~\cite{sharma2019ganak}, and ExactMC~\cite{lai2021power}.
The second method, known as \textsf{ConditionedCounting}, requires the preliminary construction of a representation for the original formula $\varphi$ to support linear model counting. 
The knowledge compilation languages that can be used for this method include d-DNNF~\cite{darwiche2004new}, \OBDDAND~\cite{lai2017new}, and SDD~\cite{choi2013compiling}.
Upon reaching line 3, the algorithm executes conditioned model counting, utilizing the compiled representation of the formula and incorporating the partial assignment derived from the ancestor calls.
The last method, \textsf{SharedCounting}, also relies on exact model counters but, unlike the first method, it shares the component cache across all model counting queries using a strategy called \textsf{XCache}. 
To distinguish it from the caching approach used in the $X$-stage, the caching method in the $Y$-stage is referred to as \textsf{YCache}.
Our experimental observations indicate that the \textsf{SharedCounting} method is the most effective within the PSE framework.

\textbf{Conjunctive Decomposition}
We employed dynamic component decomposition (well-known in model counting and knowledge compilation) to divide a formula into components, thereby enabling the dynamic programming calculation of their corresponding entropy, as stated in Proposition \ref{prop:Entropy-proposition}.

\textbf{Variable Decision Heuristic} 
We implemented the current state-of-the-art model counting heuristics for picking variables from $Y$ in the computation of Shannon entropy, including \textsf{VSADS}~\cite{sang2005heuristics}, \textsf{minfill}~\cite{darwiche2009modeling}, the \textsf{SharpSAT-TD} heuristic~\cite{korhonen2021integrating}, and \textsf{DLCP}~\cite{lai2021power}.
Our experiments consistently demonstrate that the \textsf{minfill} heuristic exhibits the best performance. Therefore, we adopt the \textsf{minfill} heuristic as the default option for our subsequent experiments.

\textbf{Pre-processing} 
We have enhanced our entropy tool, PSE, by incorporating an advanced pre-processing technique that capitalizes on literal equivalence in model counting. 
This idea is inspired by the work of Lai et al.~\cite{lai2021power} on capturing literal equivalence in model counting.
Initially, we extract equivalent literals to simplify the formula. Subsequently, we restore the literals associated with the variables in set $Y$ to prevent the entropy of the formula from becoming non-equivalent after substitution. 
This targeted restoration is sufficient to ensure the equivalence of entropy calculations.
The new pre-processing method is called \textsf{Pre} in the following.
This pre-processing approach is motivated by two primary considerations. 
Firstly, preprocessing based on literal equivalence can simplify the formula and enhance the efficiency of subsequent model counting. 
Secondly, and more crucially, it can reduce the treewidth of tree decomposition, which is highly beneficial for the variable heuristic method based on tree decomposition and contributes to improving the solving efficiency.

\section{Experiments}
\label{sec:Experiments}

We implemented a prototype of PSE in C++ and performed evaluations in order to understand its performance.
We experimented on benchmarks from the same domains as the state-of-the-art Shannon entropy tool EntropyEstimation~\cite{golia2022scalable}, that is, QIF benchmarks, plan recognition, bit-blasted versions of SMTLIB benchmarks, QBFEval competitions, program synthesis, and combinatorial circuits~\cite{lee2018solving} \footnote{The paper of EntropyEstimation~\cite{golia2022scalable} does not mention the domains of program synthesis and combinatorial circuits but actually presents benchmarks in these two domains.}.
EntropyEstimation reported results only for 96 successfully solved benchmarks (denoted $\mathrm{Suite}_1$), which we found insufficient for scalability testing. To ensure a rigorous evaluation, we extended $\mathrm{Suite}_1$ as follows:
\begin{itemize}
	\item $\mathrm{Suite}_2$ (399 benchmarks): $\mathrm{Suite}_1$ is from the benchmarks that were used to test a well-known model counter called Ganak \footnote{The benchmarks are available at \url{https://github.com/meelgroup/ganak}}; thereby, we added each circuit formula in the aforementioned domain but not in $\mathrm{Suite}_1$ from the Ganak benchmarks.
	\item $\mathrm{Suite}_3$ (459 benchmarks): Incorporated 60 additional combinatorial circuits \footnote{The additional benchmarks are available at \url{https://github.com/nianzelee/PhD-Dissertation}} from~\cite{lee2018solving}   on the basis of $\mathrm{Suite}_2$.
\end{itemize}
All experiments were run on a computer with Intel(R) Core(TM) i9-10920X CPU @ 3.50GHz and 32GB RAM.
Each instance was run on a single core with a timeout of 3000 seconds and 4GB memory, the same setup adopted in the evaluation of EntropyEstimation.

Through our experiments, we sought to answer the following research questions:
\begin{enumerate}[RQ1:]
	\item How does the runtime performance of PSE compare to the state-of-the-art Shannon entropy tools with (probabilistic) accuracy guarantee?
	\item How do the utilized methods impact the runtime performance of PSE? 
\end{enumerate}

\subsection{RQ1: Performance of PSE}

Golia et al.~\cite{golia2022scalable} have already demonstrated that their probably approximately correct tool EntropyEstimation is significantly more efficient than the state-of-the-art precise Shannon entropy tools. 
The comparative experiments between PSE and the state-of-the-art precise tools are presented in the appendix.
We remark that PSE significantly outperforms the precise baseline (the baseline was able to solve only 18 benchmarks, whereas PSE solved 332 benchmarks). 
This marked improvement is attributed to the linear entropy computation capability of \ADDAND and the effectiveness of various strategies employed in PSE.

Table \ref{table:benchmarks} presents the performance comparison between PSE and EntropyEstimation across the three benchmark suites.  
For $\mathrm{Suite}_1$, EntropyEstimation solved two more instances than PSE, indicating a slight advantage. 
However, among the 94 instances that both solved, PSE demonstrated higher efficiency.
Moreover, PSE achieved a lower PAR-2~\footnote{The PAR-2 scoring scheme gives a penalized average runtime, assigning a runtime of two times the time limit for each benchmark that the tool fails to solve} score than EntropyEstimation, suggesting that PSE holds an overall performance advantage.  
We remark that in the computation of the PAR-2 scores, we \emph{did not perform additional penalization} for each successful run of EntropyEstimation as its output was very close to the true entropy.
For $\mathrm{Suite}_2$, PSE solved 44 more instances than EntropyEstimation and achieved a significantly lower PAR-2 score, further demonstrating its superior performance.  
For $\mathrm{Suite}_3$, PSE solved 56 more instances than EntropyEstimation. Additionally, in terms of overall performance, PSE achieved a significantly lower PAR-2 score than EntropyEstimation, reinforcing its advantage. 
 
\begin{table}[!ht]
	\centering
	\renewcommand{\arraystretch}{1.2}
	\small
	\begin{tabular}{cc *{3}{@{\hspace{9pt}}c} c}
		\toprule
		\multirow{2}{*}{Suit}
		& \multirow{2}{*}{Tool}  
		& \multicolumn{3}{c}{Solved Instances} 
		& \multirow{2}{*}{PAR-2 score} \\
		\cline{3-5}
		& & Unique & Fastest & Total \\ 
		\midrule
		\multirow{2}{*}{$\mathrm{Suite}_1$} & EntropyEstimation & \textbf{2} & 0 & \textbf{96} & 149.63 \\
		{} & PSE (ours) & 0 & \textbf{94} & 94 & \textbf{126.04} \\\hdashline
		\multirow{2}{*}{$\mathrm{Suite}_2$} & EntropyEstimation & 2 & 0 & 268 & 2132.52 \\
		{} & PSE (ours) & \textbf{46} & \textbf{266} & \textbf{312} & \textbf{1314.59} \\\hdashline
		\multirow{2}{*}{$\mathrm{Suite}_3$} & EntropyEstimation & 2 & 0 & 276 & 2536.53 \\
		{} & PSE (ours) & \textbf{58} & \textbf{274} & \textbf{332} & \textbf{1666.11} \\

		\bottomrule
	\end{tabular}
	\caption{Detailed performance comparison of PSE and EntropyEstimation. Unique represents the number of instances that can only be solved by a specific tool. Fastest represents the number of instances that a tool solves with the shortest time.}
	\label{table:benchmarks}
\end{table}

Figure \ref{figure:scatter} demonstrates the detailed performance comparison between PSE and EntropyEstimation on $\mathrm{Suite}_3$.
More intuitively,  among all the benchmarks that both PSE and EntropyEstimation are capable of solving, in 98\% of those benchmarks, the efficiency of PSE surpasses that of EntropyEstimation by a margin of at least ten times.
For all the benchmarks where PSE and EntropyEstimation did not timeout and took more than 0.1 seconds, the mean speedup is 506.62, which indicates an improvement of more than two orders of magnitude. 

The aforementioned results clearly indicate that PSE outperforms EntropyEstimation in the majority of instances. This validates a positive answer to \textbf{RQ1}: PSE outperforms the state-of-the-art Shannon entropy tools with (probabilistic) accuracy guarantee.
We remark that EntropyEstimation is an estimation tool for Shannon entropy with probabilistic approximately correct results \cite{golia2022scalable}.
PSE consistently performs better than a state-of-the-art entropy estimator across most instances, highlighting that our methods significantly enhance the scalability of precise Shannon entropy computation.

\begin{figure}[h]
	\centering
	\includegraphics[width=0.6\linewidth, 
	height=0.8\textheight, 
	keepaspectratio]{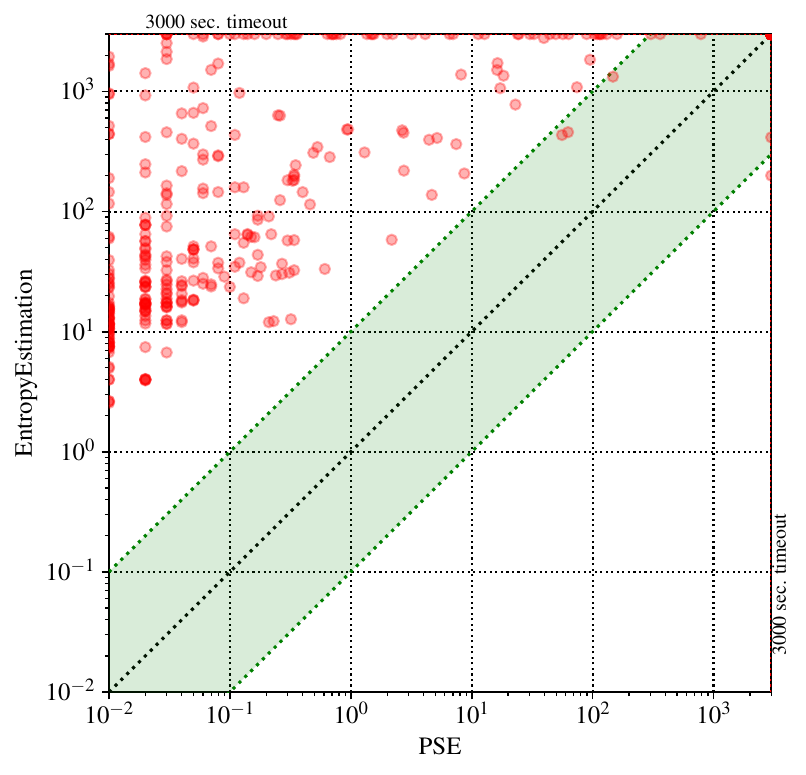}
	\caption{Scatter Plot of the running time Comparison between PSE and EntropyEstimation.
	}
	\label{figure:scatter}
\end{figure}

\subsection{RQ2: Impact of algorithmic configurations}
To better verify the effectiveness of the PSE methods and answer \textbf{RQ2}, we conducted a comparative study on all the utilized methods, including methods for the $Y$-stage: \textsf{Conjunctive Decomposition}, \textsf{YCache}, \textsf{Pre}, variable decision heuristics (\textsf{minfill}, \textsf{DLCP}, \textsf{SharpSAT-TD heuristic}, \textsf{VSADS}), and methods for the $X$-stage: \textsf{XCache} and \textsf{ConditionedCounting}.
In accordance with the principle of control variables, we conducted ablation experiments to evaluate the effectiveness of each method, ensuring that each experiment differed from the PSE tool by only one method.
The cactus plot for the different methods is shown in Figure \ref{figure:3}, where PSE represents our tool. 
PSE-wo-Decomposition indicates that the \textsf{ConjunctiveDecomposition} method is disabled in PSE, which means that its corresponding trace is ADD.
PSE-wo-Pre means that \textsf{Pre} is turned off in PSE.
PSE-ConditionedCounting indicates that PSE employed the \textsf{ConditionedCounting} method rather than \textsf{SharedCounting} in the $X$-stage.
PSE-wo-XCache indicates that the caching method is turned off in PSE in the $X$-stage. 
PSE-wo-YCache indicates that the caching method is turned off in PSE in the $Y$-stage.
PSE-dynamic-SharpSAT-TD means that PSE replaces the \textsf{minfill} static variable order with the dynamic variable order: the variable decision-making heuristic method of SharpSAT-TD (all other configurations remain identical to PSE, with only the variable heuristic differing).
Similarly, PSE-dynamic-DLCP and PSE-dynamic-VSADS respectively indicate the selection of dynamic heuristic \textsf{DLCP} and \textsf{VSADS}.

\begin{figure}[h]
	\centering
	\includegraphics[width=0.6\linewidth, 
	height=0.8\textheight, 
	keepaspectratio]{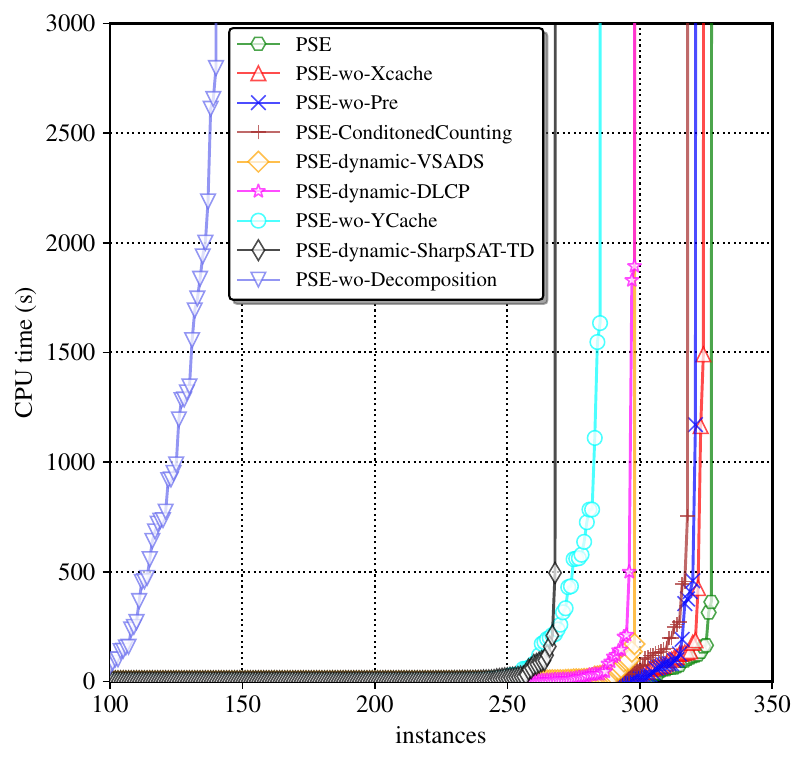}
	\caption{Cactus plot comparing different methods.}
	\label{figure:3}
\end{figure}

The experimental results highlight the significant effects of conjunctive decomposition.
Caching also demonstrates significant benefits, consistent with findings from previous studies on knowledge compilation. 
It can also be clearly observed that \textsf{Pre} improves the efficiency of PSE.
Among the heuristic strategies, it is evident that \textsf{minfill} performs the best.
In the technique of the $X$-stage, the \textsf{ConditionedCounting} method performs better than \textsf{SharedCounting} without \textsf{XCache}, but not as well as the \textsf{SharedCounting} method.
This comparative experiment indicates that the shared component caching is quite effective.
The \textsf{ConditionedCounting} method's major advantage is its linear time complexity~\cite{lai2017new}. 
However, a notable drawback is the requirement to construct an \OBDDAND (or other knowledge compilation languages such as d-DNNF, SDD, etc.) based on a static variable order, which can introduce considerable time overhead for more complex problems. 
Although the \textsf{ConditionedCounting} method is not the most effective, we believe it is still a promising and scalable method. 
In cases where an \ADDAND can be efficiently constructed based on a static variable order, the \textsf{ConditionedCounting} method may be more effective than the \textsf{SharedCounting} method, especially when modeling counting in the $X$-stage is particularly challenging.
Finally, PSE utilizes the \textsf{SharedCounting} strategy in the $X$-stage, and incorporates  \textsf{ConjunctiveDecomposition}, \textsf{YCache}, \textsf{Pre}, and the \textsf{minfill} heuristic method in the $Y$-stage.

Finally, we analyze the effectiveness of algorithmic configurations across benchmarks. 
In terms of the number of solved instances, PSE either solves the most instances or ties with other configurations across all domains. Regarding the PAR-2 score, on QBF benchmarks, PSE-dynamic-VSADS has the lowest score, while in other domains, PSE has the lowest scores.
Among all the instances, there are a total of two instances~\footnote{The two instances are  in bit-blasted versions of SMTLIB benchmarks with names blasted\_case\_0\_ptb\_1 and blasted\_TR\_b12\_1\_linear.} which PSE failed to solve within the specified time limit, but were solved by PSE-wo-Pre.
In PSE, we use the minfill heuristic to construct a tree decomposition for a given circuit formula. We also observed that the resulting treewidth strongly correlates with compilation size---smaller treewidth in a benchmark typically leads to more efficient PSE execution.

\section{Related work}
\label{sec:Related}
Our work is based on the close relationship between QIF, model counting, and knowledge compilation.
We introduce relevant work from three perspectives: (1) quantitative information flow analysis, (2) model counting, and (3) knowledge compilation.

\textbf{Quantified information flow analysis} 
At present, the QIF method based on model counting encounters two significant challenges.
The first challenge involves constructing the logical postcondition $\Pi_{proc}$ for a program $proc$~\cite{zhou2018static}.
Although symbolic execution can achieve this, existing symbolic execution tools have limitations and are often challenging to extend to more complex programs, such as those involving symbolic pointers.
The second challenge concerns model counting, a key focus of our research. 
For programs modeled by Boolean clause constraints, Shannon entropy can be computed via model counting queries, enabling the quantification of information leakage.
Golia et al.~\cite{golia2022scalable} have made notable contributions to this field. 
They proposed the first efficient Shannon entropy estimation method with PAC guarantees, utilizing sampling and model counting. 
Their approach focuses on reducing the number of model counting queries by employing sampling techniques. 
Nevertheless, this method yields only an approximate estimation of entropy.
Our research is motivated by the work of Golia et al., but diverges in its approach and optimization strategy. 
We enhance the existing model counting framework for precise Shannon entropy by reducing the number of model counting queries and concurrently improving the efficiency of model counting solutions.
Inspired by Golia et al.'s work, our research differs in approach and optimization strategy. 
We improve the existing model counting framework for precise Shannon entropy by reducing the number of model counting queries and enhancing solution efficiency. 

\textbf{Model counting}
Since the computation of entropy relies on model counting, we reviewed advanced techniques in this domain.
The most effective methods for exact model counting include component decomposition, caching, variable decision heuristics, pre-processing, and so on.
In our research, these methods can all be optimized and improved for application in Shannon entropy computation.
The fundamental principle of disjoint component analysis involves partitioning the constraint graph into separate components that do not share variables.
The core of \ADDAND lies in leveraging component decomposition to enhance the efficiency of construction.
We also utilized caching techniques in the process of computing entropy, and our experiments once again demonstrated the power of caching techniques.
Extensive research has been conducted on variable decision heuristics for model counting, which are generally classified into static and dynamic heuristics.
In static heuristics, the \textsf{minfill}~\cite{darwiche2009modeling} heuristic is notably effective, while in dynamic heuristics, \textsf{VSADS}~\cite{sang2005heuristics}, \textsf{DLCP}~\cite{lai2021power}, and \textsf{SharpSAT-TD  heuristic}~\cite{korhonen2021integrating} have emerged as the most significant in recent years.
Lagniez et al.~\cite{lagniez2017preprocessing} offer a comprehensive review of preprocessing techniques in model counting.

\textbf{Knowledge compilation} 
The motivation for knowledge compilation lies in transforming the original representation into a target language to enable efficient solving of inference tasks.
Darwiche et al. first proposed a compiler called c2d~\cite{darwiche2004new} to convert the given CNF formula into Decision-DNNF. 
Lai et al. proposed two extended forms of OBDD:
Ordered Binary Decision Diagram with Implied Literals (OBDD-L~\cite{lai2013reduced}), which is developed by extracting implied literals recursively;
\OBDDAND~\cite{lai2017new}, which is proposed by integrating conjunctive decomposition.
Both forms aim to reduce the size of OBDD.
Exploiting literal equivalence, Lai et al.~\cite{lai2021power} proposed a generalization of Decision-DNNF, called CCDD, to capture literal equivalence. 
They demonstrate that CCDD supports model counting in linear time and design a model counter called ExactMC based on CCDD.
In order to compute the Shannon entropy, the focus of this paper is to design a compiled language that supports the representation of probability distributions.
Numerous target representations have been used to concisely model probability distributions. 
For example, d-DNNF can be used to compile relational Bayesian networks for exact inference~\cite{chavira2006compiling};
Probabilistic Decision Graph (PDG) is a representation language for probability distributions based on BDD~\cite{jaeger2004probabilistic}.
Macii and Poncino~\cite{macii1996exact} utilized knowledge compilation to calculate entropy, demonstrating that ADD enables efficient and precise computation of entropy.
However, the size of ADD often grows exponentially for large scale circuit formulas. 
To simplify ADD size, we propose an extended form, \ADDAND. It uses conjunctive decomposition to streamline the graph structure and facilitate cache hits during construction.

\section{Conclusion}
\label{sec:Conclusion}

In this paper, we propose a new compilation language, \ADDAND, which combines ADD and conjunctive decomposition to optimize the search process in the first stage of precise Shannon entropy computation. 
In the second stage of precise Shannon entropy computation, we optimize model counting queries by utilizing the shared component cache.
We integrated preprocessing, heuristics, and other methods into the precise Shannon computation tool PSE, with its trace corresponding to \ADDAND. 
Experimental results demonstrate that PSE significantly enhances the scalability of precise Shannon entropy computation, even outperforming the state-of-the-art entropy estimator EntropyEstimation in overall performance.
We believe that PSE has opened up new research directions for entropy computing in Boolean formula modeling.

\appendix

\section{Comparison with precise Shannon entropy computing methods}
\label{sec:Appendix}

In the appendix, we compare PSE with the state-of-the-art precise methods of computing Shannon entropy.
The existing precise Shannon entropy tools do not use the techniques in the state-of-the-art model counters.
Just like~\cite{golia2022scalable}, we implemented the precise Shannon entropy baseline with state-of-the-art model counting techniques.
In the baseline, we enumerate each assignment $\sigma \in \mathit{Sol}(\varphi)_{\downarrow Y}$ and compute $p_{\sigma} = \frac{\left| \mathit{Sol}(\varphi(Y \mapsto \sigma)) \right|}{ \left| \mathit{Sol}(\varphi)_{\downarrow X} \right| }$, where $\mathit{Sol}(\varphi(Y \mapsto \sigma))$ denotes the set of solutions of $\varphi(Y \mapsto \sigma)$ and $\mathit{Sol}(\varphi)_{\downarrow X}$ denotes the set of solutions of $\varphi$ projected to $X$.
As can be seen from the previous proposition, $| \mathit{Sol}(\varphi)_{\downarrow X} |$ can be replaced by $| \mathit{Sol}(\varphi) |$.
Finally, entropy is computed as $H(\varphi) = \sum_{\sigma \in 2^Y} -p_{\sigma} \log {p_{\sigma}} $.
For a formula with an output set size of $m$, $2^m$ model counting queries are required. 
For model counting queries, we have adopted two different methods. 
One is to directly invoke the currently state-of-the-art model counters, and our experiment, SharpSAT-TD, Ganak, and ExactMC are employed. 
The other method involves utilizing knowledge compilation. Firstly, we construct an offline knowledge compilation language that supports linear model counting, and then perform online conditioning based on each assignment over the $Y$ variables.
The knowledge compilation language in our experiment is (\OBDDAND via KCBox), and this method corresponds to the baseline-Panini in Table \ref{table:precise}.
Panini is an efficient compilation tool that supports the compilation of CNF formulas into the form of \OBDDAND to enable efficient model counting. 

\begin{table}[!h]
	\centering
	\resizebox{\linewidth}{!}
	{
		\footnotesize 
		\setlength{\tabcolsep}{3pt} 
		\begin{tabular}{ crrrrrrrrrrrrr } 
			\toprule 
			\multirow{2}*{instance} & \multirow{2}*{$\left| X \right|$} & \multirow{2}*{$\left| Y \right|$} & \multicolumn{2}{c}{baseline-SharpSAT-TD} & \multicolumn{2}{c}{baseline-ExactMC} &
			\multicolumn{2}{c}{baseline-Ganak} & \multicolumn{2}{c}{baseline-Panini} & \multicolumn{2}{c}{PSE} \\
			
			\cmidrule(r){4-5}   
			\cmidrule(r){6-7}
			\cmidrule(r){8-9}  
			\cmidrule(r){10-11}
			\cmidrule(r){12-13}
			
			& & & Entropy & Time(s) & Entropy & Time(s) & Entropy & Time(s)  & Entropy & Time(s)  & Entropy & Time(s)\\ 
			
			\midrule  
			blasted$\_$case102.cnf & 11 & 23 & 8 & 81.29 & 8 & 0.39 & 8 & 35.42 & 8 & 1.82 & 8 & 0.16 \\
			s27$\_$15$\_$7.cnf & 7 & 25 & 3.3 & 142.77  & 3.3 & 0.15 & 3.3 & 0.25 & 3.3 & 0.31 & 3.3 & 0.14 \\
			small-bug1-fixpoint-5.cnf & 66 & 21 & 12.81 & 79.51 & 12.81 & 2.99 & 12.81 & 104.17 & 12.81 & 8.79 & 12.81 & 0.18 \\
			small-bug1-fixpoint-6.cnf & 79 & 25 & 15.31 & 1406.47 & 15.31 & 51.9 & 15.31 & 1846.36 & 15.31 & 127.70 & 15.31  & 0.21 \\
			blasted$\_$case144.cnf & 138 & 627 & - & - & - & - & - & - & - & - & 77.62 & 35.42 \\
			s1423a$\_$15$\_$7.cnf & 91 & 773 & - & - & - & - & - & - & - & - &  88.17 & 232.65 \\
			s382$\_$15$\_$7.cnf & 24 & 326 & - & - & - & - & - & - & - & - & 23.58 & 0.22 \\
			CVE-2007-2875.cnf & 752 & 32 & - & - & - & - & - & - & - & - & 32 & 0.72 \\
			10.sk$\_$1$\_$46.cnf & 47 & 1447 & - & - & - & - & - & - & - & - &  13.58 & 0.18 \\
			
			\bottomrule
		\end{tabular}
		
	}
	\caption{Entropy computation performance of baselines and PSE. "-" represents that the entropy cannot be computed within the specified time limit.
		\label{table:precise}
	}
\end{table}

Our experimental results indicate that all four representative state-of-the-art exact Shannon entropy baselines can only solve 18 benchmarks within the time limit of 3000 seconds, whereas PSE can solve 332 benchmarks. 
Table \ref{table:precise} shows the comparison between baselines and PSE on some instances.
Notably, although some instances have similar sizes of $X$ and $Y$ sets, their computation times vary significantly (e.g., blasted\_case144.cnf vs. s1423a\_15\_7.cnf). To clarify, computation times depend on multiple parameters, such as an exponential relationship with treewidth in addition to problem size. We employ the minfill heuristic to compute tree decompositions, guiding the entropy calculation. Our experimental results show that blasted\_case144.cnf has a minfill treewidth of 22, whereas s1423a\_15\_7.cnf has a minfill treewidth of 27.
The results show a significant improvement in the efficiency of PSE for computing the precise Shannon entropy.
We remark that the poorer performance of these baselines is due to the exponential size of $2^Y$.

\bibliography{references}

\appendix

\end{document}